\newcommand{\acronym}{FLowDUP\xspace}
\title{Federated Learning with Unlabeled Clients: Personalization Can Happen in Low Dimensions}
\author{%
  Hossein Zakerinia\thanks{Equal contribution.}\\ 
  Institute of Science and Technology Austria (ISTA) \\ 
  \texttt{hossein.zakerinia@ista.ac.at} \\
  \And
  Jonathan Scott\footnotemark[1] \\
  Institute of Science and Technology Austria (ISTA) \\ 
  \texttt{jonathan.scott@ista.ac.at} \\
  \And
  Christoph H. Lampert \\
  Institute of Science and Technology Austria (ISTA) \\
  \texttt{chl@ista.ac.at} 
}
\DeclareRobustCommand\onedot{\futurelet\@let@token\@onedot}
\def\@onedot{\ifx\@let@token.\else.\null\fi\xspace}
\def\iid{{i.i.d}\onedot}
\def\eg{{e.g}\onedot} 
\def\ie{{i.e}\onedot}
\theoremstyle{plain}
\newtheorem{theorem}{Theorem}[section]
\theoremstyle{definition}
\theoremstyle{remark}
\newcommand{\C}{\mathcal{C}}
\newcommand{\X}{\mathcal{X}} 
\newcommand{\Y}{\mathcal{Y}} 
\newcommand{\F}{\mathcal{F}}
\newcommand{\M}{\mathcal{M}} 
\newcommand{\A}{\mathcal{A}}
\newcommand{\qa}{\mathcal{Q}(A)}
\newcommand{\q}{\mathcal{Q}} 
\newcommand{\p}{\mathcal{P}}
\newcommand{\KL}{\operatorname{\mathbf{KL}}}
\newcommand{\er}{\mathcal{R}} 
\newcommand{\her}{\widehat{\mathcal{R}}} 
\newcommand{\ter}{\widetilde{\mathcal{R}}} 
\newcommand{\E}{\operatorname*{\mathbb{E}}}
\newcommand{\prior}{\mathfrak{P}}
\newcommand{\posterior}{\mathfrak{Q}}
\newcommand{\pri}{\mathfrak{P}(\p)}
\newcommand{\post}{\mathfrak{Q}(A, \qa)}
\newcommand{\R}{\mathbb{R}} 
\newcommand{\hnparam}{\psi_h}
\newcommand{\regparam}{\psi_r}
\newcommand{\jointparam}{\psi}
\newcommand{\reg}{\lambda}
\newcommand{\loss}{\mathcal{L}} 
\newcommand{\regularizer}{\Omega} 
\renewcommand{\paragraph}[1]{\medskip\noindent\textbf{#1}\quad}
\begin{document}

\maketitle

\begin{abstract}
Personalized federated learning has emerged as a popular approach to training on devices holding statistically heterogeneous data, known as clients. 
However, most existing approaches require a client to have labeled data for training or finetuning
in order to obtain their own personalized model.
In this paper we address this by proposing \acronym, a novel method that is able to generate a
personalized model using only a forward pass with unlabeled data.
The generated model parameters reside in a low-dimensional subspace, enabling efficient communication and computation.
\acronym's learning objective is theoretically motivated by our new transductive multi-task 
PAC-Bayesian generalization bound, that provides performance guarantees for unlabeled clients.
The objective is structured in such a way that it allows both clients with labeled data and 
clients with only unlabeled data to contribute to the training process. 
To supplement our theoretical results we carry out a thorough experimental evaluation of \acronym, demonstrating
strong empirical performance on a range of datasets with differing sorts of statistically heterogeneous clients.
Through numerous ablation studies, we test the efficacy of the individual components of the method.

\end{abstract}

\section{Introduction}
Federated learning (FL) \citep{fedavg} is a widely adopted approach to enable
privacy-preserving machine learning in distributed environments.
Data holding devices (clients) cooperatively train predictive models under the coordination of a central server, without sharing their local data. 
Differing underlying behaviors often result in client data following different distributions, potentially rendering single global model training ineffective~\citep{advances}.
Personalized federated learning \citep{federated_multitask} has emerged as a popular approach to address this 
challenge of statistical heterogeneity.
It enables clients to learn personalized models while still leveraging collective knowledge from the broader network.
However, since most approaches to personalization rely on some form of finetuning on the client local data they require 
a client to have labeled data in order to obtain a personalized model. 
This poses a major challenge for personalization in applications where active participation from the client (\ie user) is required to create labels for their local data.
In such scenarios, even though some clients will likely be \emph{labeled} (have data with labels), the majority of potentially participating clients will likely be \emph{unlabeled} (\ie have no labels for their data).
Similarly, most future clients will potentially also be unlabeled. 

In this paper we address this issue with \acronym: \textbf{F}ederated \textbf{Low} \textbf{D}imensional \textbf{U}nlabeled \textbf{P}ersonalization. 
\textbf{\acronym learns to generate a personalized model for any client, even if it has only unlabeled data.}
To do so, it trains a hypernetwork \citep{hyp_nets} that takes as input an unlabeled dataset and outputs the parameters of a personalized predictive model. 
Crucially, rather than outputting all the parameters of a personalized model, the hypernetwork instead 
outputs a parametrization in a subspace of much smaller dimension than the underlying personalized model.
The full model is obtained by multiplying the subspace parameters with a fixed (random) \emph{expansion matrix}.
The use of a low-dimensional subspace is a crucial contribution of our work. It allows
for the generation of large model architectures and means that the hypernetwork is small 
and efficient enough to be transmitted to and run on the client devices.
This allows \acronym to adhere strictly to the federated paradigm that clients data should not be transferred away from the device. 
Without it, hypernetwork-based methods are only able to generate very small models and 
are only practical if the hypernetwork runs on the central 
server~\citep{pfedHN, late_to_the_party, pefll}, which is at odds with the FL 
principle that clients' data never leaves the clients' device.

We propose a learning objective for \acronym that consists of two parts. 
The first part measures the quality of the generated client models, for which it exploits that some of the
clients at training time do have labels. 
The second part prevents overfitting by penalizing large deviations between the generated models 
and a learned regularization term. 
Evaluating it requires only unlabeled data, so it can be computed from all clients available during training. %
\acronym and its learning objective are motivated by theoretical results derived in a multi-task framework. 
Specifically, we prove a generalization bound that provides performance %
guarantees on unlabeled clients. Our learning objective is derived by optimizing terms that appear in this bound.

In addition to our theoretical contributions we carry out an experimental evaluation over a range
of datasets, exhibiting various types of statistical heterogeneity, and at a range of proportions 
of labeled clients present during training.
We conduct additional experiments and ablation studies to better understand the 
different components of \acronym.

To summarize, our contributions are as follows:
\begin{itemize}
    \item We propose \acronym, a method that generates low-dimensional personalized model parameters using only an on-device forward pass on unlabeled data.
    \item We derive generalization bounds in a multi-task framework, which, in our FL setting, provide guarantees on \acronym's performance on unlabeled clients.
    \item Based on these bounds we propose a theoretically motivated training objective that is able to also leverage unlabeled clients during training.
    \item We conduct an experimental evaluation demonstrating strong empirical performance of \acronym and 
    illustrate the efficacy of the individual components with ablation studies.
\end{itemize}

\section{Background}

\paragraph{Notation} %
We assume a federated setting with $n$ clients participating in training. Each client $i \in \{1, \dots, n\}$ possesses a data distribution $D_i$ over shared inputs and output sets, $\X \times \Y$. The clients are statistically heterogeneous, meaning that $D_i \neq D_j$ for $i\neq j$ is possible.  
We call $n_L$ the number of clients that hold labeled data and $n_U$ the number that hold only unlabeled data, \ie $n_L + n_U = n$. 
Writing $S_i$ for the dataset of client $i$, and $m_i=|S_i|$, we then have 
\begin{align}
S_i \coloneqq
    \begin{cases}
(X_i, Y_i) = \big(x^j_i, y^j_i \big)_{j=1}^{m_i}\sim D_i & \text{if } i \in \mathcal{I}_L, \\
X_i=\big(x^j_i \big)_{j=1}^{m_i}\sim D_{i | \X}       & \text{if } i \in \mathcal{I}_U,
\end{cases}
\end{align}
where $D_{i | \X} $ denotes the marginal distribution of $D_i$ over $\X$.
We denote by $f(\cdot\, ;\, \theta) : \X \rightarrow \Y$ a predictive model parameterized by $\theta\in\R^d$, 
and by $\ell: \Y \times \Y \rightarrow \R$ a loss function, such that $\ell(y, f(x ;\, \theta))$ 
measures the prediction quality of $f(\cdot\, ;\, \theta)$.

\paragraph{Learning in a subspace}
Modern neural networks use models with many parameters, 
often more than the number of training examples. 
However, it has been shown that their \emph{intrinsic dimensionality} is much smaller than their number of parameters \citep{li2018measuring}, 
\ie it is possible to learn strong models in a random subspace of much smaller dimension than the full dimension of the model parameter space.
Formally, to learn model parameters $\theta\in\R^d$, given an initialization model $\theta_0 \in \R^d$, and a random expansion matrix $P \in \R^{d \times k}$ (which describes the basis of a random subspace), we can describe a model in the generated random subspace by learning a vector $v \in \R^{k}$ as 
\begin{align}\label{eq:random_expansion}
    \theta = \theta_0 + Pv.
\end{align}
Prior work in standard learning~\cite{li2018measuring} and recently multi-task learning~\cite{zakerinia2025deep} 
showed that $k$ can typically be chosen orders of magnitude smaller than $d$ while still allowing high 
accuracy models to be trained. 

In this work, we keep the matrix $P$ and the initialization $\theta_0$ fixed, such that $\theta$ is 
completely determined by $v$. 
With a slight abuse of notation, we then also write $f(\cdot\, ;\, v)$ as shorthand for $f(\cdot\, ;\, \theta_0 + Pv)$.

\section{Personalized federated learning with unlabeled clients} %

\subsection{Training objective}\label{subsec:objective}
Our goal is to generate personalized models for clients that possess only unlabeled data. We assume we have access to labeled and potentially also unlabeled clients during training, and that future clients we will encounter might be unlabeled. 

The primary object we work with is a hypernetwork $h: \p(\X) \rightarrow \R^k$, 
where $\p(\X)$ denotes the power set of $\X$.
The goal of $h$ is take in an unlabeled client dataset and 
output the (low dimensional subpace) parameters of a personalized model 
that works for the underlying client data distribution.
Formally, if $X \subset \X$ then $h$ is defined as
\begin{equation}\label{eq:hnet_architecture}
    h(X) \coloneqq h_2 \Bigg(\frac{1}{|X|} \sum_{x\in X} h_1(x)\Bigg),
\end{equation}
where $h_1: \X \rightarrow \R^e$ is feature extraction module, which is followed by an average across the (batch of) features, and a fully-connected module $h_2: \R^e \rightarrow \R^k$.

To generate a client model $f$ given $X$, we first compute $v=h(X)$, 
then the full-dimensional parameters for $f$
are obtained by random expansion: $\theta = \theta_0 + Pv$ as defined in \eqref{eq:random_expansion}.
The final personalized model is $f(\cdot\, ;\, \theta) : \X \rightarrow \Y$. 
Note that $P$ is a random matrix and the initialization $\theta_0$ is 
typically random, and both are fixed before training. 
Therefore, these matrices can be generated on the client using an 
agreed-on random seed, and do not need to be transmitted via the
network. 

We denote the trainable parameters of $h$ by $\hnparam$.
Additionally, \acronym training also uses a learnable regularization, $\regparam \in \R^k$,
in the low dimensional model subspace to prevent overfitting. 
The learnable parameters of \acronym are therefore $\jointparam \coloneqq (\hnparam, \regparam)$.

Training \acronym means to learn parameters $\jointparam$ that, given unlabeled data $X$, are able to output personalized client model parameters that work on the distribution that $X$ was drawn from.
Our proposed training objective for doing this has two parts, a loss $\loss$, and a (learnable) 
regularizer, $\regularizer$.
\begin{equation}
    \min_{\jointparam} \  \loss(\jointparam) + \reg \regularizer(\jointparam),
\end{equation}
The loss measures the quality of a model that is generated for a client, and can therefore 
be evaluated only on clients that have at least some labeled data, 
\begin{align}
    \loss(\jointparam) &\coloneqq \sum_{i\in \C} \ \sum_{(x', y') \in (X'_i,Y'_i)}\ell(y', f(x';\, h(X_i\, ;\, \hnparam))),\label{eq:loss_L} 
\end{align}
where, $\C$ is a cohort (subset) of clients, $X_i$ is a batch of data without labels 
from client $i$ and $(X'_i,Y'_i)$ is a batch of data with labels from client $i$. 
In case $(X_i,Y_i)$ are empty, for example because client $i$ has no labeled data, the value of the sum is simply taken to be $0$. 
Intuitively, $\loss$ penalizes the hypernetwork parameters $\hnparam$ for outputting 
models that perform poorly on the clients' data distributions.
Notice that different data batches are used to generate the client model and evaluate it. This is because we want $f$ to be good on the client distribution, and not just on the actual batch used to generate $f$.

The regularizer ensures that the learned models do not diverge too much from each other. 
It prevents overfitting to individual clients by penalizing large deviations between the client
model parameters and a global learned regularizer. It can be computed on all clients, because 
only unlabeled data is required for its evaluation,
\begin{align}
    \regularizer(\jointparam) &\coloneqq \sum_{i\in \C} \|h(X_i\, ;\, \hnparam) - \regparam\|^2,\label{eq:loss_U} 
\end{align}
where, again, $\C$ is a cohort of clients and $X_i$ is an unlabeled batch of data from client $i$. 
Note that, as required for federated learning, the participating clients and data batches can be 
different in every update step. The regularization strength, $\lambda \in \R$, is a hyperparameter.
We provide a more in depth and formal motivation for our loss function $\loss$, based on our generalization
bound, in Section \ref{sec:theory}.

\subsection{Training procedure} \label{subsec:training}

\begin{algorithm}[t]
\caption{\acronym}
\label{alg:method}
\begin{algorithmic}[1]
\STATE \textbf{Input:} Client datasets $\{S_i\}_{i=1}^n$, number of rounds $T$, global learning rate $\eta_g$, labeled client sampling rate $\alpha$
\STATE  Initialize learnable parameters: $\jointparam$
\FOR{round $t = 1$ to $T$}
    \STATE Server selects a subset of clients $\mathcal{C}$, with fraction $\alpha$ labeled
    \FOR{each client $i \in \mathcal{C}$ \textbf{in parallel}}
        \STATE Client $i$ receives current parameters $\jointparam$
        \STATE Client $i$ performs local updates:
        $\Delta\jointparam_i \gets \texttt{ClientUpdate}(S_i, \jointparam)$        
        \STATE Client $i$ sends update $\Delta\jointparam_i$ to server
    \ENDFOR
    \STATE Server aggregates updates:
    $\Delta\jointparam \gets \frac{1}{|\mathcal{C}|}\sum_{i\in \mathcal{C}} \Delta\jointparam_i$
    \STATE Server updates parameters: $\jointparam \gets \texttt{GradientUpdate}(\jointparam, \Delta\jointparam \, ;\, \eta_g)$
\ENDFOR
\STATE \textbf{Return:} Final parameters $\jointparam$
\end{algorithmic}
\end{algorithm}

\begin{algorithm}[t]
\caption{\texttt{ClientUpdate}}
\label{alg:client_update}
\begin{algorithmic}[1]
\STATE \textbf{Input:} client $i$, dataset $S_i$, learnable parameters $\jointparam$, number of local epochs $E$, local batch size $B$, local learning rate $\eta_l$, regularization strength $\reg$
\STATE $\jointparam_{\text{start}} \gets \jointparam$
\FOR{local epoch $j = 1$ to $E$}
    \STATE $\mathcal{B} \gets$ client splits $S_i$ into batches of size $B$
    \FOR{each batch in $\mathcal{B}$}\label{algline:sample_batch}
        \STATE $X \gets $ random half of batch of data (no labels required) \label{algline:split_batch}
        \STATE $(X', Y') \gets $ remaining half of batch data (with labels if available, else $Y'\gets \emptyset$) \label{algline:split_batch2}
        \STATE $v\gets h(X; \hnparam)$ \label{algline:generate_model}
        \STATE $\Omega \gets \|v - \regparam\|^2$ \label{algline:loss_U}
        \STATE $\theta \gets \theta_0 + Pv$
        \STATE $\loss \gets \sum_{(x',y')\in(X',Y')}\ell(y', f(x',\theta))$  \qquad // interpreted as $\loss\gets 0$ if $Y'=\emptyset$\label{algline:loss_L}
        \STATE $g  \gets \nabla_{\jointparam}(\loss + \lambda \regularizer)$ \label{algline:total_loss}
        \STATE $\jointparam \gets \texttt{GradientUpdate}(\jointparam, g ;\, \eta_l)$ \label{algline:client_grad_step}
        \qquad // any suitable optimizers, \eg SGD or Adam
    \ENDFOR
\ENDFOR
\STATE \textbf{Return:} parameter update: $\jointparam - \jointparam_{\text{start}}$ \label{algline:client_model_diff}
\end{algorithmic}
\end{algorithm}

\acronym is designed to be trainable by standard federated learning frameworks.
Algorithms~\ref{alg:method} and \ref{alg:client_update} show the necessary steps.
Algorithm \ref{alg:method} shows the server steps of \acronym that follow 
a standard Federated Averaging~\citep{fedavg} pattern. 
Training proceeds over a number of rounds. Each round, the server samples a subset of clients with the restriction that a certain fraction, $\alpha$, are clients with labeled data.
Each client in the subset receives the current parameters $\jointparam$ from the server, computes an update to $\jointparam$ and shares this update with the server. The server then aggregates the client updates and uses the aggregate to update the learnable parameters using an update rule of their choice. 

The most important step, namely the $\texttt{ClientUpdate}$, is shown in Algorithm \ref{alg:client_update}.
The client receives the latest iteration of the parameters $\jointparam$ from the server and computes
an update direction to those parameters by training on their local data for $E$ local epochs.
At each epoch, the client samples a batch of data (line \ref{algline:sample_batch}) 
and randomly splits the batch into two equal parts (lines \ref{algline:split_batch}, \ref{algline:split_batch2}). 
The first part is used to generate the subspace parameter $v$ (line \ref{algline:generate_model}), 
note that no labels are required for this step.
The client then computes its contribution to the regularizer value $\Omega$ (line \ref{algline:loss_U}).
Next, the labeled clients compute the labeled portion of the loss $\loss$ (line \ref{algline:loss_L})
of the generated client model on the second half of the batch.
The total loss is computed (line \ref{algline:total_loss}) and the client updates the learnable parameters
$\jointparam$ with a single gradient step (line \ref{algline:client_grad_step}).
After all local epochs have been run, the client computes the difference between the final and initial parameters (line \ref{algline:client_model_diff}) and returns this update to the server.

\subsection{Model generation} %
\label{subsec:inference} 

After training, inference with \acronym is simple and lightweight, in particular it requires no model training
or finetuning, just a single forward pass through $h$. 
Given a client, with unlabeled data $X = (x)_{j=1}^m$, the client gets $\jointparam$ from the server and generates a personalized model by passing $X$ through the hypernetwork: $v = h(X\, ;\, \hnparam)$ and expanding to full dimensionality:
$\theta = \theta_0 + Pv$. The client then has a personalized classifier $f(\cdot\, ;\, \theta)$ that it can use 
(either on $X$ and/or on future data).

\section{Theory}\label{sec:theory} 
In this section, we provide a generalization bound for \emph{transductive multi-task learning} in the PAC-Bayesian framework, which provides a theoretical justification for our algorithm. 
In contrast to \emph{inductive learning}, which uses a labeled training dataset to learn a model for inference on future samples, \emph{transductive learning} \citep{vapnik1998statistical}, involves access to a set of unlabeled data for which prediction are desired. Similarly, in transductive multi-task learning, there are both labeled and unlabeled tasks, and the goal is to learn models for all tasks jointly. As our main theoretical contribution, we prove a PAC-Bayesian generalization bound for transductive multi-task learning for stochastic algorithms that generate a model given an unlabeled dataset. The proof and the general results are provided in Appendix \ref{app:proofs}. Here we provide a result tailored for our algorithm. Specifically, define a Gaussian distribution over the parameters of the hypernetwrk, $\rho_h = \mathcal{N}(\hnparam\, ;\, \alpha_h\text{Id})$ for a fixed $\alpha_h$, and $n$ posterior models for $n$ clients as  $Q_i = \mathcal{N}(h(S_i\, ;\,\hnparam)\, ;\, \alpha_\theta\text{Id})$, and a regularization distribution as $\q = \mathcal{N}(\regparam\, ;\, \alpha_r\text{Id})$. Our result bounds the gap between the true risk, $\er$, of all clients 
\begin{align}
\er(\rho_h) &= \mathbb{E}_{\hnparam' \sim \rho_h} \frac{1}{n} \sum_{i=1}^n \mathbb{E}_{(x', y') \sim \mathcal{D}_i} \ell\big(y', f(x';\, h(S_i\, ;\, \hnparam'))\big),\intertext{and the training risk, $\her$, of the labeled clients}
\her(\rho_h) &= \mathbb{E}_{\hnparam' \sim \rho_h} \frac{1}{n_L} \sum_{i=1}^{n_L} \frac{1}{m} \sum_{j=1}^{m} 
\ell\big(y^j_i, f(x^j_i;\, h(S_i\, ;\, \hnparam'))\big).\notag
\end{align}

\begin{restatable}{theorem}{gaussianbound}\label{thm:gaussianbound}
For all $\delta>0$, and any loss function $\ell: \Y \times \Y \rightarrow [0, 1]$, the following statement holds 
with probability at least $1-\delta$ over the sampling of $n_L$ clients of $n$ clients and randomness of the dataset. 
For all parameter vectors, $\jointparam=(\hnparam, \regparam)$:
\begin{align}
    \er(\rho_h) \leq  \her(\rho_h)
    &\ +\ \sqrt{ \Big(1 - \frac{n_L}{n}\Big)      \frac{\frac{1}{2\alpha_h}\|\hnparam\|^2 + c_1}{2n_L}    }
    \notag \\
    &\ +\ \E_{\hnparam' \sim \rho} \sqrt{\frac{
                \frac{1}{2\alpha_\theta}\sum_{i=1}^{n}\E_{\regparam'}\|h(S_i\, ;\,\hnparam') - \regparam'\|^2 +
                \frac{1}{2\alpha_r} \|\regparam\|^2 + c_2}{2mn}} 
\label{eq:gaussianbound}
\end{align}
where $c_1$ and $c_2$ are logarithmic terms in $n$ and $n_L$.
\end{restatable}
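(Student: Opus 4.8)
The plan is to obtain Theorem~\ref{thm:gaussianbound} by specializing a more general transductive multi-task PAC-Bayesian inequality (the abstract version would be stated and proved first, in the appendix) to the isotropic Gaussian family, so that every KL divergence collapses to an explicit quadratic form via $\KL(\N(\mu_1;\sigma^2\text{Id})\|\N(\mu_2;\sigma^2\text{Id})) = \|\mu_1-\mu_2\|^2/(2\sigma^2)$, up to dimension-counting and $\log(1/\delta)$ contributions that get absorbed into $c_1,c_2$. The argument has a nested structure over two independent sources of randomness: the draw of the datasets $S_i\sim D_i$, and the choice of which $n_L$ of the $n$ clients are labeled, modeled as a subsample drawn without replacement.

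First I would prove a \emph{within-client} bound. Conditioning on the datasets, for each client $i$ a McAllester/Maurer-type PAC-Bayes inequality relates the population risk $\E_{(x,y)\sim D_i}\ell(y,f(x;h(S_i;\hnparam')))$ to the empirical average over the $m$ samples, taking the posterior over client $i$'s parameters to be $\N(h(S_i;\hnparam');\alpha_\theta\text{Id})$ and the prior to be the mixture of $\N(\regparam';\alpha_\theta\text{Id})$ over $\regparam'\sim\q=\N(\regparam;\alpha_r\text{Id})$; convexity of the KL divergence in its second argument then bounds the per-client complexity by $\E_{\regparam'}\tfrac{1}{2\alpha_\theta}\|h(S_i;\hnparam')-\regparam'\|^2$, while one further PAC-Bayes level comparing $\q$ to the fixed prior $\N(0;\alpha_r\text{Id})$ — legitimate precisely because that prior is data-independent whereas $\q$'s center $\regparam$ is learned — contributes $\tfrac{1}{2\alpha_r}\|\regparam\|^2$. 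Averaging the per-client bounds over all $n$ clients, carrying $\E_{\hnparam'\sim\rho_h}$ through both sides, and using concavity of $\sqrt{\cdot}$ to merge the $n$ per-client square roots into one produces the term of \eqref{eq:gaussianbound} containing $\sum_{i=1}^n\E_{\regparam'}\|h(S_i;\hnparam')-\regparam'\|^2$ with its $2mn$ normalization. Crucially this applies the within-client bound to \emph{all} $n$ clients, including the unlabeled ones — whose labels the transductive setting posits to exist though unobserved — which is legitimate because the inequality holds regardless of whether the empirical risk on a given client is observable.

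Next I would handle the fact that the empirical risk is observable only on the $n_L$ labeled clients. Splitting $\tfrac1n\sum_{i=1}^n$ of the empirical risks into a labeled average (this is $\her(\rho_h)$, with weight $n_L/n$) and an unlabeled average (weight $1-\tfrac{n_L}{n}$), I replace the unobservable unlabeled average by the labeled one at the cost of a sampling-without-replacement concentration term: a Hoeffding--Serfling inequality combined with a PAC-Bayes change of measure for the hyper-posterior $\rho_h$ against the hyper-prior $\N(0;\alpha_h\text{Id})$ produces $\sqrt{(1-\tfrac{n_L}{n})\tfrac{\frac{1}{2\alpha_h}\|\hnparam\|^2+c_1}{2n_L}}$ — the $\|\hnparam\|^2$ arising because the hypernetwork is the object shared across clients, so transferring empirical performance between client subsets costs $\KL(\rho_h\|\cdot)$. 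Splitting the failure probability $\delta$ across the two levels and collecting lower-order terms into $c_1,c_2$ finishes the proof; the "for all $\jointparam=(\hnparam,\regparam)$" uniformity is automatic, since PAC-Bayes bounds hold simultaneously over all posteriors and here $\rho_h$, the $Q_i$, and $\q$ are all parametrized by $\jointparam$.

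I expect the transductive step to be the main obstacle: pushing a PAC-Bayesian change of measure through a without-replacement concentration inequality while keeping the bound uniform over $\hnparam$, obtaining precisely the factor $1-n_L/n$ and the $1/n_L$ (rather than $1/n$) scaling, and consistently bookkeeping the unlabeled clients, which enter the bound only through the model-deviation term and never through $\her$. By comparison, the within-client PAC-Bayes application, the convexity manipulation for the mixture prior, and the Gaussian KL computations are routine.
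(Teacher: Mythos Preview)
Your two-level decomposition --- generalize from all-clients-population to all-clients-empirical, then from all-clients-empirical to labeled-clients-empirical --- is exactly the paper's structure, and your transductive step (Hoeffding--Serfling plus a PAC-Bayes change of measure for $\rho_h$ against $\N(0;\alpha_h\text{Id})$) is essentially how one proves the B\'egin et al.\ (2014) transductive bound that the paper invokes off-the-shelf. Your identification of the unobserved labels of unlabeled clients as the intermediate object is also right: the paper names this $\ter(\rho)$.

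Where you diverge from the paper, and where your description has a gap, is the within-client step. The paper does \emph{not} run a per-client McAllester bound and then a separate hierarchical level. Instead it constructs a single joint posterior $\posterior$ over $(P,f_1,\ldots,f_n)\in\M(\F)\times\F^{\otimes n}$ by sampling $P\sim\qa$ and $f_i\sim A(X_i)$, and a joint prior $\prior$ by sampling $P\sim\p$ and $f_i\sim P$; one change-of-measure at this joint level gives $\KL(\posterior\|\prior)=\KL(\qa\|\p)+\E_{P\sim\qa}\sum_i\KL(A(S_i)\|P)$, which under the Gaussian instantiation is directly $\tfrac{1}{2\alpha_r}\|\regparam\|^2+\tfrac{1}{2\alpha_\theta}\E_{\regparam'\sim\q}\sum_i\|h(S_i;\hnparam')-\regparam'\|^2$. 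No convexity-of-KL step is needed: the expected quadratic form appears because the complexity term already contains $\E_{P\sim\qa}[\cdot]$, not $\KL$ to a mixture.

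Your route --- take the per-client prior to be the $\q$-mixture, bound its KL by convexity, then ``pay'' $\KL(\q\|\p)$ at a further level --- does not work as stated: the $\q$-mixture has centre $\regparam$, which is learned from the data, so it is not a legal prior for the per-client PAC-Bayes bound, and one cannot retroactively legitimize a data-dependent prior by adding a hyper-KL term after the fact. A hierarchical argument in the style of Amit--Meir or Rothfuss et al.\ \emph{can} be carried out, but it must introduce the hyper-level change of measure before the per-client bound is applied, not afterwards; the paper's joint construction (following Zakerinia et al.\ 2024) is precisely the clean device for this, and it also avoids the union bound over $n$ clients that your per-client route would incur.
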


\paragraph{Discussion.}
\cref{thm:gaussianbound} provides a direct mathematical justification for 
\acronym. 
Our overall goal is to achieve high accuracy across all tasks, \ie minimize the left hand side of \eqref{eq:gaussianbound}.
That is not a computable quantity, though, so as a proxy we instead minimize its upper 
bound on the right hand side of \eqref{eq:gaussianbound}. 
That consists of $ \her(\rho_h)$, which corresponds to the training loss across labeled tasks, \ie 
\acronym's loss $\loss$, and some complexity terms. Besides $\|\hnparam\|$ and $\|\regparam\|$, 
which are automatically minimized when using optimization steps with weight decay, the latter in
particular contains $\sum_{i=1}^{n}\E_{\regparam'}\|h(S_i\, ;\,\hnparam') - \regparam'\|^2$,
which corresponds to \acronym's
regularizer $\regularizer$. 
The difference between the theoretical viewpoint and the practical algorithm
is that in practice we use deterministic neural networks instead of stochastic models,
so no expected value operations are required. Also, we treat the regularization strength as
a hyper-parameter that can be model-selected, instead of relying on the (often overly 
pessimistic) values provided by generalization theory.

On a technical level, \cref{thm:gaussianbound} has a number of desirable properties.
Firstly, it directly reflects the transductive setting, as it controls the 
risk across all clients in terms of the training loss of just the labeled ones.
The complexity term, however, is computed from all clients and its denominator scales with
the total number of available samples, which justifies the use of unlabeled clients 
during training, which contribute to the learning of a shared regularizer.
Finally, the sample complexity of the first complexity term is improved by $\sqrt{1 - n_L/n}$ 
compared to standard inductive bounds, which holds the promise of better between-client 
generalization in transductive compared to inductive learning.

\section{Experiments}
Here we present our empirical results. In \ref{subsec:experiment_details}
we explain our experimental setup, in \ref{subsec:experiment_results}
we include our main results and in \ref{subsec:experiment_additional} we provide
additional experiments and ablation studies to better understand \acronym.
Implementation details and further ablation studies can be found in Appendix \ref{appendix:experiments}.
The values in all results tables are given as the mean and standard deviation of runs over 3 random seeds.
We implement our experiments using the $\texttt{pfl-research}$ library \citep{pfl_research}.

\subsection{Experimental Details}\label{subsec:experiment_details}

\paragraph{Baselines}
We are interested in producing a personalized model for an unlabeled client and consider only baselines that
are capable of doing this. 
The simplest to consider are those that train a single (full dimensional) global model $f$ on just 
the labeled clients.
This of course results in a predictive model that can be used by any unlabeled client. In this category we include 
Federated Averaging (FedAvg) \citep{fedavg} and FedProx \citep{fedprox}. 
We also include a baseline that we call LD-FedAvg that trains the low dimensional subspace parameterization
of $f$, using federated averaging. 
This baseline therefore has the same client model $f$ parameterization as the personalized models
generated by \acronym.
Finally, we include FedTTA \citep{fedTTA} a federated test time adaption method. It personalizes a 
globally trained model $f$ to an unlabeled client with a gradient update obtained by a forward pass
with unlabeled data through an adaptation model.

\paragraph{Datasets}
We include a range of datasets with different types of statistically heterogeneous clients.
Following prior work in personalized FL we simulate label heterogeneity with a non-IID partitioning of 
CIFAR-10 \citep{cifar}, where each client receives 100 datapoints from only 2 classes. 
We simulate heterogeneity in the features $\X$ by partitioning and rotating Fashion-MNIST 
\citep{fashion_mnist} and MNIST \citep{mnist}, as first done by \citet{clusteredFL}.
Specifically, we do this by creating an IID partitioning of the dataset into clients, where each client
receives 100 randomly IID sampled datapoints. 
Then each client randomly samples a rotation from $\{0^\circ, 90^\circ, 180^\circ, 270^\circ\}$ and rotates all of their
images by that amount.
Finally, we also include results for FEMNIST \citep{leaf}, a federated dataset with 62 classes and
an existing partition into around 3500 clients, holding in total $\approx 800000$ datapoints.
The clients in FEMNIST exhibit both feature (different writing styles) and label (different
class frequencies per client) heterogeneity.
For each of the above datasets we simulate a range of different levels of label prevalence 
in the clients.
Specifically, for $p \in \{0.1, 0.2, 1.0\}$ we randomly choose $pn$ of the $n$ total clients
to have labels and the rest are fully unlabeled.

\paragraph{Network architectures} 
We run experiments using two different architectures for the client model $f$, a CNN following prior work \citep{fedavg} and a ResNet18 \citep{resnet}.
On partitioned CIFAR10 we present results for both the CNN and the ResNet18, for the remaining datasets
we use a CNN.
For \acronym recall the structure of $h$ is: 
$h(X) = h_2 (\frac{1}{|X|} \sum_{x\in X} h_1(x))$.
For our main results in Section \ref{subsec:experiment_results} we always implement $h_1$ using the 
same architecture (CNN or ResNet18) as $f$ for that particular setting.
In Section \ref{subsec:experiment_additional} we explore the case when they differ.
We implement $h_2$ as a fully connected network with a single hidden layer. 
Both \acronym and LD-FedAvg work with subspace parameterizations of the client models $f$.
For the results in Section \ref{subsec:experiment_results} we set the subspace dimension
to $k=10^4$.
This represents an approximately $15\times$ reduction in the number of client model parameters for the CNN and $1100\times$ for ResNet18. We examine the impact the choice of $k$ has in \ref{subsec:experiment_additional}.
For FedTTA the global model architecture is the same as that used for FedAvg and the 
adaptation model is the same 3-layer network as used in \citep{fedTTA}.

\subsection{Experimental results}\label{subsec:experiment_results}

\begin{table}[t]
\caption{Class-partitioned CIFAR10 for varying fractions, $p$, of the training clients having labeled data. Accuracy on unlabeled test clients.}
\centering

\begin{tabular}{lcccccc}
\toprule
\textbf{Model} & \multicolumn{3}{c}{\textbf{CNN}} & \multicolumn{3}{c}{\textbf{ResNet18}} \\
\cmidrule(lr){2-4} \cmidrule(lr){5-7}
$\bm{p}$ & \textbf{0.1} & \textbf{0.2} & \textbf{1.0} & \textbf{0.1} & \textbf{0.2} & \textbf{1.0} \\
\midrule
FedAvg & 47.9 $\pm$ 0.1 & 53.5 $\pm$ 0.4 & 63.6 $\pm$ 0.4 & 63.6 $\pm$ 0.9 & 68.9 $\pm$ 0.7 & 75.9 $\pm$ 0.3 \\
FedProx & 47.6 $\pm$ 0.3 & 53.3 $\pm$ 0.3 & 63.7 $\pm$ 0.6 & 63.5 $\pm$ 0.8 & 68.8 $\pm$ 0.5 & 75.9 $\pm$ 0.3 \\
LD-FedAvg & 41.6 $\pm$ 0.6 & 47.8 $\pm$ 1.1 & 56.2 $\pm$ 0.7 & 54.6 $\pm$ 0.8 & 60.0 $\pm$ 0.5 & 65.4 $\pm$ 0.1 \\
FedTTA & 57.2 $\pm$ 1.0 & 60.2 $\pm$ 1.1 & 69.7 $\pm$ 3.8 & 69.3 $\pm$ 2.3 & 77.2 $\pm$ 1.5 & 81.9 $\pm$ 2.1 \\
\acronym & 66.1 $\pm$ 1.5 & 75.3 $\pm$ 1.8 & 86.6 $\pm$ 0.1 & 72.7 $\pm$ 1.3 & 82.9 $\pm$ 1.8 & 92.3 $\pm$ 0.4 \\
\bottomrule
\end{tabular}

\label{tab:cifar_joint}
\end{table}

\begin{table}[t]
\caption{Rotated Fashion-MNIST (left) and FEMNIST (right) for varying fractions, $p$, of the training clients having labeled data. Accuracy on unlabeled test clients.}
\centering
\begin{tabular}{lcccccccc}
\toprule
\textbf{Dataset} & \multicolumn{3}{c}{\textbf{Rotated Fashion-MNIST}} & \multicolumn{3}{c}{\textbf{FEMNIST}} \\
\cmidrule(lr){2-4} \cmidrule(lr){5-7}
$\bm{p}$ & \textbf{0.1} & \textbf{0.2} & \textbf{1.0} & \textbf{0.1} & \textbf{0.2} & \textbf{1.0} \\
\midrule
\text{FedAvg} & 77.7 $\pm$ 0.1 & 79.2 $\pm$ 0.2 & 81.4 $\pm$ 0.6 & 76.5 $\pm$ 0.3 & 78.0 $\pm$ 0.3 & 84.3 $\pm$ 0.1 \\
\text{FedProx} & 77.3 $\pm$ 0.7 & 78.7 $\pm$ 0.4 & 80.0 $\pm$ 0.6 & 72.3 $\pm$ 0.4 & 74.8 $\pm$ 0.4 & 83.3 $\pm$ 0.0 \\
\text{LD-FedAvg} & 76.1 $\pm$ 1.1 & 78.8 $\pm$ 0.3 & 81.3 $\pm$ 0.1 & 61.4 $\pm$ 1.4 & 74.4 $\pm$ 0.3 & 80.6 $\pm$ 0.4 \\
\text{FedTTA} & 78.3 $\pm$ 0.5 & 80.2 $\pm$ 0.1 & 81.2 $\pm$ 0.2 & 71.1 $\pm$ 2.4 & 72.6 $\pm$ 3.5 & 74.3 $\pm$ 3.1 \\
\acronym & 81.5 $\pm$ 0.3 & 83.3 $\pm$ 0.5 & 87.3 $\pm$ 0.2 & 84.5 $\pm$ 0.4 & 86.4 $\pm$ 0.3 & 89.3 $\pm$ 0.1 \\
\bottomrule
\end{tabular}

\label{tab:mnists_joint}
\end{table}

Our main results are shown in Tables \ref{tab:cifar_joint} and \ref{tab:mnists_joint}.
Table \ref{tab:cifar_joint} shows class-partitioned CIFAR10 
as we vary the fraction of labeled clients present in the training set $(p)$.
The left part of the table shows the results for when the model architecture is a CNN
and the right part for ResNet18.
The results show that \acronym performs best across the board.
As expected all methods improve as the fraction of labeled clients increases and when we 
switch from using a CNN to a ResNet18.
Notably, even when using the CNN architectures, \acronym is able to outperform the 
non-personalized baselines using a ResNet18.
We also observe that LD-FedAvg has significantly lower performance than FedAvg. This shows that
in fact for a single global predictive model, lower subspace dimension reduces the power of the model.
However, \acronym's strong performance shows that, given a well learned representation of client
heterogeneity in $h$, a low dimensional subspace model can obtain good personalized performance.

Table \ref{tab:mnists_joint} shows the results for Rotated Fashion-MNIST (left) 
and FEMNIST (right). The results for Rotated MNIST can be found in the Appendix in Table
\ref{tab:mnist}.
For these datasets all methods use the CNN architecture.
When comparing to CIFAR10, for both these datasets FedTTA performs quite poorly.
This is to be expected given that FedTTA adapts the global model based on the output
logits of the unlabeled data. 
When statistical heterogeneity is present only in the labels (as is the case for class-partitioned 
CIFAR10), the logits are a good summary of the client personalization requirements.
However, in the presence of feature heterogeneity the logits alone provide a poor adaptation signal.
\acronym performs better as it personalizes based on all the clients feature data.

\subsection{Additional Studies}\label{subsec:experiment_additional}
Here we provide results from additional experiments investigating the components
of \acronym.
In addition to these we include in the appendix a study of the effect that training with unlabeled clients has (Tables \ref{tab:unlabeled_ablation_cnn} and \ref{tab:unlabeled_ablation_resnet18}) as well
as the benefits of the learnable regularizer $\regparam$ (Table \ref{tab:reg_effect}).

\begin{table}[t]
    \caption{Accuracy of \acronym for architecture combinations on class-partitioned CIFAR10.}
    \label{tab:model_architectures}
    \centering
    \begin{tabular}{lccccc}
\toprule
\multicolumn{2}{c}{\textbf{Model Architecture}} & \multicolumn{4}{c}{\textbf{Fraction Labeled $\bm{(p)}$}} \\
\cmidrule(lr){1-2} \cmidrule(lr){3-6}
$\bm{h_1}$ & $\bm{f}$ & \textbf{0.1} & \textbf{0.2} & \textbf{0.5} & \textbf{1.0} \\
\midrule
CNN&CNN& 66.1 $\pm$ 1.5 & 75.3 $\pm$ 1.8 & 83.3 $\pm$ 1.5 & 86.6 $\pm$ 0.1 \\
CNN&ResNet18 & 71.4 $\pm$ 0.5 & 80.0 $\pm$ 3.3 & 88.0 $\pm$ 0.9 & 90.7 $\pm$ 0.2 \\
ResNet18&CNN& 67.0 $\pm$ 2.4 & 78.0 $\pm$ 3.1 & 87.3 $\pm$ 1.3 & 88.5 $\pm$ 1.1 \\
ResNet18&ResNet18&72.7 $\pm$ 1.3 & 82.9 $\pm$ 1.8 & 91.0 $\pm$ 0.4 & 92.3 $\pm$ 0.4 \\
\bottomrule
\end{tabular}

\end{table}

\paragraph{Differing client model architectures}
We train $h$ to output personalized model parameters in a subspace of dimension $k$.
However, for any given $k$ we are still free to choose the architecture of the client model $f$.
Moreover, the choice of architecture of $f$ does not affect the communication cost of the 
training procedure (which is critical and typically the primary bottleneck when running federated training
in practice).
It does, however, affect the compute cost on the client side.
On the flip side, the choice of hypernetwork architecture does affect the communication cost as $h$ is transmitted
from server to client. This gives \acronym some flexibility when trading off communication vs
computational cost.
In Table \ref{tab:model_architectures} we show results for CIFAR10 for different combinations of hypernetwork and 
client model architectures.
The results show that for a fixed subspace dimension $k$, changing the architecture of $h_1$ or $f$ can have a substantial impact on performance.
Most notably, when comparing row 1 with 2, and row 3 with 4, we see
that changing $f$ from a CNN to a ResNet18 gives a performance boost of 4-5\%
without incurring any additional communication cost.

\begin{table}[t]
\caption{Accuracy of \acronym with different subspace dimensions, $k$, on class-partitioned CIFAR10.}
\label{tab:intrinsic_dimension}
\centering
\begin{tabular}{lcccc}
\toprule
\textbf{Fraction Labeled} $\bm{(p)}$ & \textbf{0.1} & \textbf{0.2} & \textbf{0.5} & \textbf{1.0} \\
\midrule
\acronym ($k=200$) & 56.6 $\pm$ 1.9 & 65.7 $\pm$ 2.9 & 70.0 $\pm$ 1.8 & 71.2 $\pm$ 0.7 \\
\acronym ($k=500$) & 60.0 $\pm$ 1.9 & 69.8 $\pm$ 2.8 & 75.9 $\pm$ 1.2 & 76.9 $\pm$ 0.8 \\
\acronym ($k=2000$) & 64.5 $\pm$ 0.3 & 72.5 $\pm$ 1.7 & 80.3 $\pm$ 1.5 & 83.5 $\pm$ 0.9 \\
\acronym ($k=10000$) & 66.1 $\pm$ 1.5 & 75.3 $\pm$ 1.8 & 83.3 $\pm$ 1.5 & 86.6 $\pm$ 0.1 \\
\bottomrule
\end{tabular}

\end{table}

\paragraph{Varying the subspace dimension}
While the choice of client model architecture affects only the computational cost of \acronym, the
subspace dimension $k$ affects both communication and computational cost.
This is because the final layer of $h$ has dimension $k$ and the matrices in \ref{eq:random_expansion} scale with $k$.
Table \ref{tab:intrinsic_dimension} shows results for \acronym on CIFAR10 as we vary $k$. In these
experiments we fix the architectures of $h_1$ and $f$ to be CNN. 
We observe that performance increases monotonically with $k$.
Most notably we observe that, with the exception of $p=0.1$, \acronym outperforms 
the baselines, even for very low values of $k$ which in turn incur low compute costs on the client.

\begin{wrapfigure}[12]{r}{0.48\textwidth}  %
    \centering
    \vspace{-1.7\baselineskip}
    \includegraphics[width=0.48\textwidth]{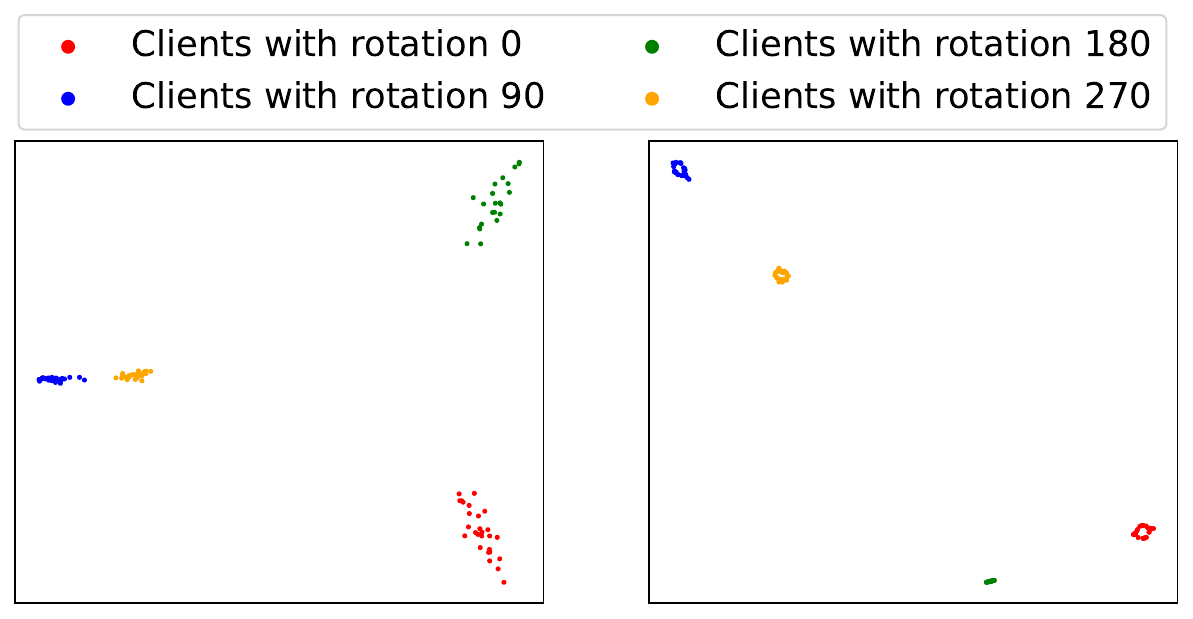}
    \caption{Visualization of the client embeddings on Rotated Fashion-MNIST. 
    Projection to two dimensions using PCA (left) and t-SNE (right).}
    \label{fig:vis_embeddings}
\end{wrapfigure}

\paragraph{Understanding dataset embeddings}
Recall the hypernetwork architecture from \eqref{eq:hnet_architecture}.
We can interpret the input to $h_2$ as an embedding representation of the client dataset $X$, which we call 
\begin{align}
r(X) \coloneqq \frac{1}{|X|} \sum_{x\in X} h_1(x).
\end{align}

To better understand the inner workings of this mechanism we visualize the space of dataset embeddings.
Concretely, for each validation client in the Rotated Fashion-MNIST dataset, we compute $r(X_i)$ for the $h$
that was trained on the training clients.
For our experiments these representations are 256-dimensional, therefore to create a visualization we apply dimensionality
reduction. We separately apply both PCA and t-SNE \citep{tsne} to project the representations into 2 dimensions.
The scatter plots (Figure \ref{fig:vis_embeddings})
show that \acronym learns to organize the space of client dataset 
representations according to the present statistical heterogeneity, namely that the
representations are clustered according to which rotation the data has.
Note that nowhere in the learning objective is this hard coded. 
Rather \acronym has learned the type of heterogeneity present in the client datasets, and organized the 
representation space so that in this case, roughly speaking, clients with the same rotation receive the same model.

\section{Related work}

\paragraph{Personalized federated learning for unlabeled clients} 
First proposed by \citet{federated_multitask}, personalized FL arose from the observation that 
statistical heterogeneity of client data distributions can make training a single global model suboptimal \citep{advances}.
However, most works in personalized FL assume that all clients hold labeled data that they 
can use to obtain a personalized model, see Appendix \ref{app:related_work}.
A small number of works have looked at the problem of personalizing models to unlabeled clients.
\citet{fedTTA} apply a popular test time adaptation method to a FL setting. A global predictive model is personalized on a client
using gradient updates obtained with forward passes through an adaptation model.
Closest to our own work are \citet{late_to_the_party, pefll} which use a combination of an embedding
network and hypernetwork to generate personalized models using only unlabeled data. 
Their approaches differ from ours in two key aspects.
Firstly, while they are able to produce personalized models for unlabeled clients, they are
not able to use the unlabeled clients during training. In contrast, \acronym also leverages 
unlabeled clients during training, which leads to improved performance.
Secondly, and more importantly, they generate all the parameters of the personalized model, 
rather than a low dimensional parameterization as \acronym does.
Due to the high computational cost of this approach they are only able to generate 
very small personalized models for the clients.
Moreover, even for such small models
the hypernetwork is too large to send to the clients and instead remains on the server. This leads to complicated multi-step communication schemes that are 
inefficient and break the federated learning paradigm of working only with aggregate client statistics in order to maintain privacy.
Finally, several works have explored Federated Representation Learning in the presence of statistically
heterogeneous clients \citep{FL_representation, PerFL_representation}.
These methods typically aim to learn a feature extractor(s) using unlabeled client data.
However, obtaining a personalized predictive model on the client would still require
the client to possess labels, and as such these methods are not applicable to the task
we aim to solve.

\paragraph{Theory}
Beyond standard single-task learning, it has been shown that in \emph{multi-task learning (MTL)}~\citep{Caruana1997MultitaskL, baxter1995learning}, when learning multiple related tasks, sharing information between them can provably improve the performance. Specifically, different works had studied the generalization behavior of multi-task learning\citep{maurer2006bounds,crammer2012shared,pontil2013excess,pentina2017unlabeled,yousefi2018local,du2021fewshot} using notions like VC-dimensions\citep{VC} or Rademacher complexity\citep{bartlett2002rademacher}. 
Recently, with the success of PAC-Bayesian bounds for studying generalization behavior of neural networks \citep{dziugaite2017computing}, and due to their strength in parameterizing multi-task learning and meta-learning\citep{schmidhuber1987evolutionary}, PAC-Bayes multi-task learning has become an active line of research ~\citep{pentina2014pac, amit2018meta, rothfuss2022pacoh, guan2022fast, zakerinia2024more}, However, the focus of these works is \emph{inductive multi-task learning}, where all tasks have access to labeled data. In contrast in this work, we introduce \emph{transductive multi-task learning}, where only a subset of the tasks have labeled data.

\section{Conclusion}
We presented \acronym, a method that can generate personalized models for clients
using only a single forward pass with unlabeled data. 
\acronym's training objective is theoretically motivated by our new generalization bound 
in a transductive multi-task framework and is able to make use of both labeled and unlabeled 
clients during training for improved performance.
Our empirical evaluation demonstrated that \acronym can achieve strong performance
in the presence of both feature and label heterogeneity.
In additional studies, we investigated the effect of model architecture choices, 
subspace dimension, the benefit of using unlabeled clients and how
\acronym is able to capture dataset heterogeneity.

\acronym's main strength is also its main limitation: because it uses only the unlabeled 
data to produce a personalized model, it will struggle in a setting where the conditional 
distributions, $(D_i)_{Y | X}$, differ across clients in a way such that knowledge of the 
marginals, $(D_i)_{X}$, are insufficient to infer good predictive models. 
Such situations could be possible, for example, in subjective prediction tasks, such as 
product recommendations or sentiment analysis. 
This limitation is not specific to \acronym though: all methods that rely only on unlabeled 
client data would fail in this setting, so additional information sources would be required, 
such as meta-data or user participation.

\section*{Acknowledgements}
This work was supported in part by the Austrian Science Fund (FWF) [10.55776/COE12].
This research was also supported by the Scientific Service Units (SSU) of ISTA through resources provided by Scientific Computing (SciComp).

\bibliography{ms.bib}
\bibliographystyle{icml2025}

\appendix

\section{Transductive multi-task learning} \label{app:proofs}

In this section, we provide our general theoretical contributions and their proof.

\paragraph{PAC-Bayesian theory} ~\citep{mcallester1998some} studies the generalization behavior of stochastic models (a distribution over a set of models $f \in \F$). Formally, when training a posterior distribution $Q \in \M(\F)$ using a dataset $S$ consisting of $m_L$ \iid labeled samples from a distribution $D$ over $\X \times \Y$ , PAC-Bayes bounds guarantee upper-bounds for the true risk of a posterior $Q$ \ie $\er(Q) = \E_{f \sim Q} \E_{(x, y) \sim D} \ell(f, x, y)$ based on its training risk ($\her(Q) = \E_{f \sim Q} \frac1{m_L} \sum_{i=1}^{m_L} \ell(f, x_i, y_i)$), and a complexity term based on $\KL(Q\|P)$  where $P \in \M(\F)$ is a data-independent prior over the space of models, and $\KL$ is the Kullback-Leibler divergence. As an example, from \citep{maurer2004note} we have that for any prior $P$, and any $\delta > 0$, over the sampling of the dataset $S$ for all posteriors $Q$ it holds: 
\begin{align}
 \er(Q)) &\le \her(Q) + \sqrt{\frac{\KL(Q\|P) + \log(\frac{2 \sqrt{m_L}}{\delta})}{2m_L}},
\label{eq:maurer}
\end{align}
This bound holds in an inductive learning setting, \ie we have a distribution $D$, and the goal is to generalize from training data to the unseen data.  On the other hand, in transductive learning \citep{vapnik1998statistical}, there is a set of labeled data, and a set of unlabeled data, and the goal is to generalize from labeled to unlabeled data. Transductive learning has also been studied in the PAC-Bayes literature. Therefore, if we have $m$ samples which $m_L$ of them is labeled we want to generalize from $\her(Q)$ to $\E_{f \sim Q} \frac1{m} \sum_{i=1}^{m} \ell(f, x_i, y_i)$. For this problem, 
 \cite{begin14pac} have proved for any prior $P$, and any $\delta > 0$, over the sampling of $m_L$ labeled samples out of $m$ samples (without replacement) for all posteriors $Q$ it holds:
\begin{align}
\E_{f \sim Q} \frac1{m} \sum_{i=1}^{m} \ell(f, x_i, y_i) \le \her_L(Q)  + \sqrt{(1 - \frac{m_L}{m})\frac{\KL(Q\|P) + \log(\frac{t(m_L, m)}{\delta})}{2m_L}},
\label{eq:begin}
\end{align}
where $t(m_L, m) = 3 \log(m_L)\sqrt{m_L (1 - \frac{m_L}{m})}$. Since the dominant term in both bounds is the $\KL$ terms, the transductive setting has an improved generalization guarantee by a factor of $\sqrt{1 - \frac{m_L}{m}}$.

\paragraph{PAC-Bayes multi-task learning}
 Beyond standard single-task learning, it has been shown theoretically that when learning multiple related tasks, sharing information between them can improve performance. Formally, in \emph{multi-task learning (MTL)}~\citep{Caruana1997MultitaskL}, there are $n$ tasks with different distributions $D_1, \dots, D_n$ and respective datasets $S_1, \dots, S_n$, and the goal is to learn individual models (or Posteriors) jointly. \emph{Meta-learning}~\citep{schmidhuber1987evolutionary}, or \emph{learning to learn}~\citep{thrun1998}, extends multi-task learning to the setting where there will also be future tasks that are not observed yet, with the assumption that the observed tasks and future tasks are all \iid samples from a distribution $\tau$ over an environment of tasks~\citep{baxter2000model}. In multi-task learning, the goal is to generalize from the average training error to the average true risk of observed clients, and in meta-learning, the goal is to generalize to the expected true risk over $\tau$. Given that in the training step, we can not learn a model for future tasks, meta-learning is formalized as learning an algorithm to apply to a future task. However, past works focused on multi-task learning and meta-learning in an inductive way, \ie the model suggested by \citet{baxter2000model}. In this work, we focus on a transductive version where there are $n$ tasks, out of which $n_L$ tasks are selected randomly to have a labeled dataset, and the remaining $n - n_L$ tasks have only unlabeled data.

Following~\citet{pentina2014pac}, PAC-Bayes is a popular framework to study multi-task learning and meta-learning, given its ability to share information between tasks through the concept of a prior.
In this work, we follow the framework introduced in \citet{zakerinia2024more}. As our theoretical contribution, we prove new bounds for the transductive multi-task learning scenario we described.

Formally, we have access to $n_L$ labeled datasets and $n-n_L$ unlabeled datasets, and our goal is to generate models with good performance on all tasks. Since we have unlabeled tasks, we work with a family of algorithms $\A$ that can create posteriors from unlabeled data \ie $A:\mathcal{P}(\X)\to\M(\F)$ is a mapping from the set of unlabeled datasets to models. We aim to learn a stochastic algorithm \ie a \emph{meta-posterior} over a set of algorithms ($\rho \in \M(\A)$) that have good performance on all tasks. For each algorithm, $A(X_1), \dots, A(X_n)$ are the task posteriors, and we denote to $\qa$ as a hyper-posterior, a distribution over priors to capture the similarities between the outputs of the algorithm, and to have a data-dependent prior for our PAC-Bayes bounds. For a detailed explanation of the role of hyper-posterior, we refer the reader to \citet{zakerinia2024more}.

For each meta-posterior, our goal is to minimize the average true risk of all tasks:
\begin{align}
\er(\rho) &= \mathbb{E}_{A \sim \rho} \frac{1}{n} \sum_{i=1}^n \mathbb{E}_{(x_i, y_i) \sim \mathcal{D}_i} \E_{f \sim  A(X_i)}\ell(y_i, f(x_i)) 
\end{align}

However, the true distribution of tasks is unknown, and we can not compute the training risk of unlabeled clients; therefore, the computable object is the average training risk of labeled clients:

\begin{align}
\her(\rho) &= \mathbb{E}_{A \sim \rho} \frac{1}{n_L} \sum_{i=1}^{n_L} \frac{1}{m} \sum_{j=1}^{m} \E_{f \sim  A(X_i)} \ell(y_{i, j}, f(x_{i, j}))
\end{align}

We now state our main theoretical results:

\begin{theorem}
For any fixed meta-prior $\pi$, fixed hyper-prior $\p$ and any $\delta>0$ 
with probability at least $1-\delta$ over the sampling of the datasets, 
for all distributions $\rho \in\M(\A)$ over algorithms,
and for all hyper-posterior functions $\q: \A \rightarrow \M(\M(\F))$
it holds 
\begin{align}
    \er(\rho) &\leq  \her(\rho) + \sqrt{ (1 - \frac{n_L}{n}) \frac{\KL(\rho||\pi) + \log(\frac{t(n_L, n)}{\delta})}{2n_L}}  
    +  \E_{A \sim \rho} \sqrt{\frac{C(A, \q, \p)+ \log(\frac{8mn}{\delta}) + 1}{2mn}}
\label{eq:bound_2}
\end{align}
where,
\begin{align}
C(A, \q, \p) &= \KL(\qa \| \p) 
\quad+ \E_{P \sim \qa}  \sum_{i=1}^{n} \KL(A(S_i) || P) 
\label{eq:complexity_2}
\end{align}
\label{theorem:Main_p}
\end{theorem}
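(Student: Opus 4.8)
The plan is to control the generalization gap $\er(\rho)-\her(\rho)$ by inserting one intermediate quantity and splitting it into a \emph{within-task} (meta-learning) part and a \emph{task-level transductive} part, each handled by a single PAC-Bayes inequality. For an algorithm $A$ and a task $i$, write $\widehat{\mathcal R}_i(A)=\frac1m\sum_{j=1}^m\E_{f\sim A(S_i)}\ell(y_i^j,f(x_i^j))$ for the empirical risk on task $i$ and $\mathcal R_i(A)=\E_{(x,y)\sim D_i}\E_{f\sim A(S_i)}\ell(y,f(x))$ for its true risk, so that $\er(\rho)=\E_{A\sim\rho}\frac1n\sum_{i=1}^n\mathcal R_i(A)$ and $\her(\rho)=\E_{A\sim\rho}\frac1{n_L}\sum_{i=1}^{n_L}\widehat{\mathcal R}_i(A)$. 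I would then use the decomposition
\begin{align*}
\er(\rho)-\her(\rho)&=\E_{A\sim\rho}\Big(\tfrac1n\sum_{i=1}^n\mathcal R_i(A)-\tfrac1n\sum_{i=1}^n\widehat{\mathcal R}_i(A)\Big)\\
&\qquad+\E_{A\sim\rho}\Big(\tfrac1n\sum_{i=1}^n\widehat{\mathcal R}_i(A)-\tfrac1{n_L}\sum_{i=1}^{n_L}\widehat{\mathcal R}_i(A)\Big),
\end{align*}
where for the $n-n_L$ unlabelled tasks $\widehat{\mathcal R}_i(A)$ is defined using the labels that are drawn but never revealed to the learner. This is sound because ``draw $S_i\sim D_i^m$ for all $n$ tasks and then reveal the labels only on a uniformly random $n_L$-subset'' has the same law as ``draw all $n$ labelled datasets and then hide $n-n_L$ of the label blocks''; the intermediate term $\frac1n\sum_i\widehat{\mathcal R}_i(A)$ cancels and never appears in the final bound.

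For the first (within-task) term I would, for each fixed $A$ and hyper-posterior $\qa$, use the standard product construction over the $n$ tasks: the posterior over the tuple $(P,f_1,\dots,f_n)$ first draws a shared prior $P\sim\qa$ and then independent task models $f_i\sim A(S_i)$, whereas the data-independent prior draws $P\sim\p$ and then $f_i\sim P$ independently; the chain rule for the KL divergence then gives $\KL(\qa\|\p)+\E_{P\sim\qa}\sum_{i=1}^n\KL(A(S_i)\|P)$, which is exactly $C(A,\q,\p)$. Applying a single PAC-Bayes bound of the form \eqref{eq:maurer} to the pooled collection of all $mn$ datapoints (they are independent, though not identically distributed across tasks, so only a routine extension of \eqref{eq:maurer} is needed) with this posterior/prior pair, and integrating over the posterior — which turns the pooled empirical and true risks into $\frac1n\sum_i\widehat{\mathcal R}_i(A)$ and $\frac1n\sum_i\mathcal R_i(A)$ — yields, with probability at least $1-\delta/2$ and for \emph{all} $A$ and $\qa$ simultaneously (PAC-Bayes bounds hold uniformly over posteriors),
\begin{align*}
\tfrac1n\sum_{i=1}^n\mathcal R_i(A)\le\tfrac1n\sum_{i=1}^n\widehat{\mathcal R}_i(A)+\sqrt{\frac{C(A,\q,\p)+\log(8mn/\delta)+1}{2mn}}.
\end{align*}
Taking $\E_{A\sim\rho}$ bounds the first term of the decomposition by the third summand of \eqref{eq:bound_2}; the $2mn$ in the denominator arises because every one of the $n$ tasks contributes its $m$ points to the pooled sample.

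For the second (transductive) term I would condition on all the datasets $(S_i)_{i=1}^n$, so that each $\widehat{\mathcal R}_i(A)\in[0,1]$ (using that $\ell$ takes values in $[0,1]$) is a fixed function of $A$, and then apply the transductive PAC-Bayes bound \eqref{eq:begin} of \citet{begin14pac} with the $n$ tasks in the role of the ``samples'', the $n_L$ labelled ones in the role of the uniformly-drawn revealed subset, posterior $\rho$, prior $\pi$, and per-sample loss $\widehat{\mathcal R}_i(A)$. This gives, with probability at least $1-\delta/2$ over the choice of subset and hence (as it holds for every realization of the datasets) over the full randomness,
\begin{align*}
\E_{A\sim\rho}\tfrac1n\sum_{i=1}^n\widehat{\mathcal R}_i(A)\le\her(\rho)+\sqrt{\Big(1-\tfrac{n_L}{n}\Big)\frac{\KL(\rho\|\pi)+\log(t(n_L,n)/\delta)}{2n_L}},
\end{align*}
which is the second summand of \eqref{eq:bound_2}, including the $\sqrt{1-n_L/n}$ improvement. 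A union bound over the two $\delta/2$-events and addition of the three displayed inequalities then proves \eqref{eq:bound_2}; \cref{thm:gaussianbound} follows by instantiating $\rho$, $\qa$ and the $A(S_i)$ as isotropic Gaussians with fixed variances and evaluating all KL divergences in closed form, absorbing the logarithmic terms into $c_1$ and $c_2$.

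The step I expect to be the main obstacle is the bookkeeping that lets the $n-n_L$ unlabelled tasks genuinely enter the complexity term of the within-task part with the full $2mn$ normalization: without the ``draw-then-hide'' coupling one is forced to pool only the $mn_L$ observed datapoints, which gives a $2mn_L$ denominator and a complexity sum over the labelled tasks only — exactly losing the benefit of unlabelled clients that the theorem is designed to exhibit. More generally, making the two PAC-Bayes levels fit together cleanly — a transductive bound over the discrete index set of tasks on one side, a single multi-task bound with a \emph{shared}, data-independent prior assembled from the hyper-prior $\p$ on the other, with disjoint confidence budgets and with the intermediate empirical term cancelling out of the final statement — is the part of the argument that will require the most care.
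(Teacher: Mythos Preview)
Your proposal is correct and follows essentially the same route as the paper: the same intermediate quantity $\frac{1}{n}\sum_{i=1}^n\widehat{\mathcal R}_i(A)$ (the paper calls it $\ter$), the same product posterior/prior construction over $(P,f_1,\dots,f_n)$ yielding $C(A,\q,\p)$ via the KL chain rule, the same application of \eqref{eq:begin} at the task level, and the same union bound. The only cosmetic difference is that where you invoke a ``routine non-iid extension of \eqref{eq:maurer}'', the paper makes that extension explicit by running change-of-measure $+$ Hoeffding $+$ Markov $+$ a union bound over a grid $\lambda\in\{1,\dots,4mn\}$, which is what produces the precise constants $\log(8mn/\delta)+1$ rather than Maurer's $\log(2\sqrt{mn}/\delta)$.
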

\begin{proof}
For the proof, we define the following intermediate objective, which quantifies the training risk of all clients. Note that this object exists (There is a labeling for the unlabeled data, however, we do not have access to them, \ie the loss function for unlabeled clients is well-defined, but we can not compute it.)
\begin{align}
    \ter(\rho) &= \mathbb{E}_{A \sim \rho} \frac{1}{n} \sum_{i=1}^n \frac{1}{m} \sum_{j=1}^{m} \E_{f \sim  A(X_i)} \ell(y_{i, j}, f(x_{i, j}))
\end{align} 
We divide the proof into bounding $\er(\rho) - \ter(\rho)$, and bounding  $\ter(\rho) - \her(\rho)$ separately. For stating our results, we use the following notations as in \citet{zakerinia2024more}: 
\begin{itemize}
    \item \emph{Posterior $\posterior(A, \qa)$:} given as input an algorithm $A\in\A$ and 
    a hyper-posterior mapping $\q:\A\to\M(\F)$ as input, it outputs the 
    distribution over $\M(\F)\times\F^{\otimes n}$ with the following generating 
    process:
    \emph{i)} sample a prior $P\sim\qa$, 
    \emph{ii)} for each task, $i=1,\dots,n$, sample a model $f_i\sim A(X_i)$. 
    \item \emph{Prior $\prior$:} For the hyper-prior $\p\in\M(\F)$ as input, it outputs the distribution over $\M(\F)\times \F^{\otimes n}$ with the following generating process: \emph{i)} sample a prior $P\sim\p$, \emph{ii)} for each task, $i=1,\dots,n$, sample a model $f_i\sim P$.
\end{itemize}

First part is bounding $\er(\rho) - \ter(\rho)$ \ie multi-task generalization bound for all $n$ tasks. The change of objective and intermediate step we have here compared to \citet{zakerinia2024more} allows us to consider the generalization behavior of all tasks (labeled and unlabeled) in contrast to only the labeled tasks. For the proof, for any task $i$ and any model $f_i$ we define:
\begin{align}
    \Delta_i(f_i) = \E_{x \sim D_i} \ell (y, f_i(x)) - \frac{1}{m} \sum_{j=1}^{m} \ell (y_{i, j}, f_i(x_{i, j}))
\end{align} 
By this definition and the definitions of $\ter$ and $\er$ we have:
\begin{align}
    \E_{(P, f_1, f_2, ..., f_n) \sim \posterior(A, \qa)} \Big[\frac{1}{n} \sum_{i=1}^{n} \Delta_i(f_i) \Big] = \er(A) - \ter(A) 
\end{align}

By \emph{change of measure inequality}~\citep{seldin2012pac}, for any $\lambda>0$, any $A\in\A$ and any $\q:\A\to\M(\M(\F))$, we have:
\begin{align}
        \er(A) - \ter(A) - \frac{1}{\lambda}  \KL\big(\post || \prior\big)
        & \leq \frac{1}{\lambda} \log \E_{(P, f_1, f_2, ..., f_n) \sim \prior} \prod_{i=1}^{n} e^{\frac{\lambda}{n} \Delta_i(f_i)}
        \label{eq:change_of_measure_app2}
\end{align}

Because $\prior$ is independent of $S_1, ..., S_n$, we have
\begin{align}    
\E_{S_1, \dots, S_n} \E_{(P, f_1, f_2, ..., f_n) \sim \prior} \prod_{i=1}^{n} e^{\frac{\lambda}{n} \Delta_i(f_i)}  
=  \E_{P \sim \p}\prod_{i=1}^n \E_{S_i}\E_{f_i \sim P} e^{\frac{\lambda}{n} \Delta_i(f_i)}
\label{eq:reordering2}
\end{align} 
By Hoeffding's lemma for $\Delta_i(f_i) \in [0, 1]$, we have 
\begin{align}
    \E_{S_i}\E_{f_i \sim P} e^{\frac{\lambda}{n} \Delta_i(f_i)} \le e^{\frac{\lambda^2}{8n^2m}}.
    \label{eq:hoeffding2}
\end{align}

Therefore by combining \eqref{eq:reordering2} and \eqref{eq:hoeffding2} we have:
\begin{align}    
\E_{S_1, \dots, S_n} \E_{(P, f_1, f_2, ..., f_n) \sim \prior} \prod_{i=1}^{n} e^{\frac{\lambda}{n} \Delta_i(f_i)} \le e^{\frac{\lambda^2}{8nm}}.
\end{align}
By Markov's inequality, for any $\epsilon>0$ we have
\begin{align}
    \mathbb{P}_{S_1, \dots, S_n}\Big(\E_{(P, f_1, f_2, ..., f_n) \sim \prior} \prod_{i=1}^{n} e^{\frac{\lambda}{n} \Delta_i(f_i)} \ge e^\epsilon\Big) \le e^{\frac{\lambda^2}{8nm} - \epsilon} \label{eq:markov2}
\end{align}
Hence by combining \eqref{eq:change_of_measure_app2} and  \eqref{eq:markov2} we get
\begin{align}
        &\mathbb{P}_{S_1, \dots, S_n}\Big(\exists{A, \q}: \er(A) - \ter(A) - \frac{1}{\lambda}  \KL(\post || \pri) \ge \frac{1}{\lambda} \epsilon\Big) \le e^{\frac{\lambda^2}{8nm} - \epsilon},
\end{align}
or, equivalently, it holds for any $\delta>0$ with probability at least $1 - \frac{\delta}{2}$:
\begin{align}
        \forall{A, \q}: \quad\er(A) - \ter(A) \le \frac{1}{\lambda}  \KL(\post || \prior) + \frac{1}{\lambda} \log(\frac{2}{\delta}) +  \frac{\lambda}{8nm}
\label{eq:bigKL-appendix2}
\end{align}
With a union bound for $\lambda \in \{1, \dots, 4mn\}$, and choosing the best $\lambda$ we get:
\begin{align}
        \mathbb{P}_{S_1, \dots, S_n}\Big(\forall{A, \q}:  \er(A) - \ter(A)
        &\le \sqrt{\frac{\KL\big(\post || \prior\big) + \log(\frac{8mn}{\delta}) + 1}{2mn}}\Big) \ge 1 - \frac{\delta}{2}
\end{align}
With probability at least $1-\frac{\delta}{2}$, the bound holds for all $A \in \A$, therefore, we can take the expectation for $A\in \rho$. Given that 
$\E_{A \sim \rho} [\er(A) - \ter(A)] = \er(\rho) - \ter(\rho)]$, the following holds with probability at least $1-\frac{\delta}{2}$, for all $\rho$, and $\posterior$:
\begin{align}
        &\forall{\rho, \q}: \ter(\rho) - \her(\rho)
        \le \E_{A \sim \rho} \sqrt{\frac{\KL(\post || \prior) + \log(\frac{8mn}{\delta}) + 1}{2mn}}. \label{eq:mtlbound1}
\end{align}
For $\KL(\post || \prior)$, we have:
\begin{align}
    \KL(\post || \prior) &=  \E_{P \sim \qa} \Big[ \E_{f_i \sim A(X_i)} \ln \frac{\qa(P) \prod_{i=1}^n A(X_i)(f_i)}{\p(P) \prod_{i=1}^n P(f_i)} \Big] \nonumber
    \\& = \E_{P \sim \qa} \Big[\ln \frac{\qa(P)}{\p(P)} \Big] + \E_{P \sim \qa} \Big[\sum_{i=1}^n \E_{f_i \sim A(S_i)} \ln \frac{A(X_i)(f_i)}{P(f_i)} \Big] \Bigg] \nonumber
    \\& = \KL(\qa\| \p) + \E_{P \sim \qa}  \sum_{i=1}^{n} \KL(A(S_i) || P) \label{eq:bigKL}
\end{align}

Combining equations \eqref{eq:mtlbound1} and \eqref{eq:bigKL} gives that with probability $1-\frac{\delta}{2}$, for all $\rho\in\M(\A), \q:\A\to\M(\M(\F))$:
\begin{align}
        &\ter(\rho) - \her(\rho)
        \le  
          \E_{A \sim \rho} \sqrt{\frac{C(A, \q, \p)+ \log(\frac{8mn}{\delta}) + 1}{2mn}}
        \label{eq:er-ter}
\end{align}  
For the second part, we apply the transductive bound \eqref{eq:begin} to generalization between $\ter(\rho)$ and $\her(\rho)$, we get:
\begin{align}
\er(\rho) &\leq  \her(\rho) + \sqrt{ (1 - \frac{n_L}{n}) \frac{\KL(\rho||\pi) + \log(\frac{t(n_L, n)}{\delta})}{2n_L}}  \label{eq:ter-her}
\end{align}
Combining \eqref{eq:er-ter} and \eqref{eq:ter-her} proves the theorem.
\end{proof}

From this general theorem, we can now prove the generalization bound for our algorithm \acronym.

\gaussianbound*
\begin{proof}

 For each hypernetwork with trainable parameters $\hnparam$, consider an algorithm $A$ that for task $i$ generates $A(S_i) = \mathcal{N}(h(S_i\, ;\,\hnparam)\, ;\, \alpha_\theta\text{Id})$.

 For each $v \in \R^k$, the corresponding prior is $P = \mathcal{N}(v; \alpha_{\theta}\text{Id})$, and $\p = \mathcal{N}(0; \alpha_r\text{Id})$ is the hyper-prior over priors \ie over their mean.  Define the meta-prior as $\pi = \mathcal{N}(0; \alpha_h\text{Id})$ over $\A$ \ie over hypernetwork parameters $\hnparam$.
 
  Define the meta-posterior as a Gaussian distribution over the parameters of the hypernetwork, $\rho_h = \mathcal{N}(\hnparam\, ;\, \alpha_h\text{Id})$ for a fixed $\alpha_h$. Define the hyper-posterior as $\q = \mathcal{N}(\regparam\, ;\, \alpha_r\text{Id})$.
  
  By applying the theorem \ref{theorem:Main_p} to the defined distributions, we complete the proof.
  \end{proof}

\section{Experiments}\label{appendix:experiments}
Here we include additional experiments (Section \ref{app:additional_experiments})
and implementation details (Section \ref{app:implem}).

\subsection{Additional experiments}\label{app:additional_experiments}

\paragraph{Rotated MNIST}
We include here results for Rotated MNIST
The observations here are broadly similar to those discussed in Section \ref{subsec:experiment_results}
with \acronym exhibiting the best performance overall.

\begin{table}[h]
\centering
\caption{Accuracy on Rotated MNIST for varying fractions ($p$)
of the training clients having labeled data.}

\begin{tabular}{lcccc}
\toprule
\textbf{Fraction Labeled} $\bm{(p)}$ & \textbf{0.1} & \textbf{0.2} & \textbf{0.5} & \textbf{1.0} \\
\midrule
FedAvg & 92.9 $\pm$ 0.2 & 94.8 $\pm$ 0.2 & 96.0 $\pm$ 0.2 & 96.4 $\pm$ 0.1 \\
FedProx & 92.8 $\pm$ 0.4 & 94.7 $\pm$ 0.1 & 95.8 $\pm$ 0.2 & 96.3 $\pm$ 0.1 \\
LD-FedAvg & 90.5 $\pm$ 0.3 & 92.4 $\pm$ 0.3 & 94.1 $\pm$ 0.0 & 95.0 $\pm$ 0.1 \\
FedTTA & 93.2 $\pm$ 0.3 & 94.8 $\pm$ 0.1 & 96.6 $\pm$ 0.1 & 97.3 $\pm$ 0.1 \\
\acronym & 94.0 $\pm$ 1.4 & 96.6 $\pm$ 0.4 & 97.8 $\pm$ 0.1 & 98.5 $\pm$ 0.1 \\
\bottomrule
\end{tabular}

\label{tab:mnist}
\end{table}

\paragraph{The effect of unlabeled clients}
One of the advantages of \acronym is that it's learning objective is structured in such a
way that it allows unlabeled clients that are present during training to contribute to 
regularizing $h$.
Specifically, unlabeled clients are able to compute the regularizer $\regularizer$ in 
\ref{eq:loss_U} and obtain from this a gradient update for $\jointparam$.
Here we investigate the effect that unlabeled clients have.
To do this we run \acronym on partitioned CIFAR10 both with and without using the unlabeled clients.
For \acronym without unlabeled clients we set the cohort size to 100 and sample only clients with labeled data.
For \acronym with labeled clients we set the cohort size to 200 and the labeled client sampling rate to $\alpha=0.5$
so that the number of labeled clients present during each round is the same in both cases.
We vary the total fraction of labeled clients, with $p \in \{0.05, 0.1, 0.2, 0.5\}$.
As always we set $k=10^4$.
Tables \ref{tab:unlabeled_ablation_cnn} and \ref{tab:unlabeled_ablation_resnet18} show the results.
As we can see, overall using unlabeled clients leads to an increase in performance.
This is most pronounced for lower values of $p$ (in particular at $p=0.5$ the effect is small
or negligible) and occurs for both architectures.

\begin{table}[h!]
\centering
\caption{Partitioned CIFAR10 Accuracy for CNN.}
\begin{tabular}{lcccc}
\toprule
\textbf{Fraction Labeled} $\bm{(p)}$ & \textbf{0.05} & \textbf{0.1} & \textbf{0.2} & \textbf{0.5} \\
\midrule
\acronym (without unlabeled) & 51.5 $\pm$ 1.8 & 65.3 $\pm$ 1.3 & 74.5 $\pm$ 1.3 & 82.5 $\pm$ 1.0 \\
\acronym (with unlabeled) & 52.3 $\pm$ 1.4 & 67.5 $\pm$ 0.4 & 76.6 $\pm$ 2.3 & 83.8 $\pm$ 0.7 \\
\bottomrule
\end{tabular}
\label{tab:unlabeled_ablation_cnn}
\end{table}

\begin{table}[h!]
\centering
\caption{Partitioned CIFAR10 Accuracy for ResNet18.}
\begin{tabular}{lcccc}
\toprule
\textbf{Fraction Labeled} $\bm{(p)}$ & \textbf{0.05} & \textbf{0.1} & \textbf{0.2} & \textbf{0.5} \\
\midrule
\acronym (without unlabeled) & 54.8 $\pm$ 1.3 & 71.7 $\pm$ 1.2 & 83.3 $\pm$ 3.5 & 90.5 $\pm$ 0.4 \\
\acronym (with unlabeled) & 57.5 $\pm$ 1.6 & 72.6 $\pm$ 0.6 & 83.0 $\pm$ 2.7 & 90.6 $\pm$ 0.7 \\
\bottomrule
\end{tabular}
\label{tab:unlabeled_ablation_resnet18}
\end{table}

\paragraph{The learnable regularizer}
\acronym prevents overfitting by penalizing large deviations between the generated client subspace
parameters and a learned regularizer $\regparam$.
Intuitively, we can think of $\regparam$ as a global subspace model that
clients should not deviate too much from.
Through the lens of our theoretical results, \ref{thm:gaussianbound}, $\regparam$ is the mean of a learnable
prior distribution on the client models.
Here we investigate the efficacy of \textit{learning} the regularizer.
To do this we replace $\regparam$ by 0, so that the regularization term in the loss becomes instead
\begin{align}
\regularizer = \sum_{i\in\C}\| h(X \, ; \, \hnparam) \|^2.
\end{align}
Note, this is of course still a reasonable regularizer more in line with classic $\ell_2$
regularization.
We train \acronym using this new non-learnable regularizer and compare it our proposed version
using the learnable regularizer \ref{eq:loss_U}.
As in the previous section we again take a cohort size of 100 labeled clients and
100 unlabeled clients each round, i.e. $\alpha=0.5$. We set $k=10^4$ and 
vary $p\in \{0.1, 0.2, 0.5, 1.0\}$.
The results can be seen in Table \ref{tab:reg_effect}.
They indeed show that the extra flexibility afforded by learning the regularizer 
leads to a modest boost in performance for all values of $p$.

\begin{table}[]
    \centering
    \caption{The effect of learning the regularizer $\regparam$. Accuracy on partitioned
    CIFAR10 for CNN.}
    \begin{tabular}{lcccc}
\toprule
\textbf{Fraction Labeled} $\bm{(p)}$ & \textbf{0.1} & \textbf{0.2} & \textbf{0.5} & \textbf{1.0} \\
\midrule
\acronym (Without Learnable Reg) & 65.5 $\pm$ 0.4 & 75.2 $\pm$ 0.8 & 83.3 $\pm$ 1.7 & 85.6 $\pm$ 0.8 \\
\acronym (With Learnable Reg) & 67.5 $\pm$ 0.4 & 76.6 $\pm$ 0.3 & 83.8 $\pm$ 0.7 & 86.6 $\pm$ 0.1 \\
\bottomrule
\end{tabular}

    \label{tab:reg_effect}
\end{table}

\subsection{Implementation details}\label{app:implem}

\paragraph{Hyperparameters}
There are number of general federated learning hyperparameters that are shared across methods.
We train all methods for the same number of global rounds $T$. For class partitioned 
CIFAR10 and FEMNIST we set $T=1000$ while for Rotated Fashion-MNIST and Rotated MNIST
we set $T=500$.
All methods use a client cohort size each round of size 100, 
a local number of epochs set to $E=1$ and a local
batch size of $B=20$ for FEMNIST and $B=50$ on all other datasets.
For all methods we tune the local learning rate $\eta_l$ on validation clients.

Regarding method specific hyperparameters.
For FedProx, we set $\mu$ following \citet{fedprox}, that is $\mu = 1$ initially
and is updated over the course of training as described in \citet{fedprox}.
For FedTTA we tune the adaptive learning rate.
For \acronym, unless otherwise stated, we set the labeled client sampling rate to $\alpha=0.9$
and the subspace dimension to $k=10^4$. We tune the regularization strength $\lambda$.

\paragraph{Model architectures}
When used for prediction the CNN follows the architecture used in prior work 
\citep{fedavg} while the ResNet18 follows the standard architecture with the
final classification layer having output dimension 10 for the 10 classes
present in CIFAR10.
When used for $h_1$, we instead replace the final linear layers of the CNN and 
ResNet18 with another with output dimension 256.
For $h_2$ we always use a fully connected network with a single hidden layer
and ReLU non-linearity.

\section{Additional Related Work}\label{app:related_work}
\paragraph{Personalized federated learning}
Approaches to personalized FL typically fall into one of the following categories: Meta-learning based approaches \citep{fed_maml2, perfedavg} which learn a single global model that can be easily personalized using a small number of gradient steps on the client. 
Parameter decomposition-approaches~\citep{fedper, fedrep,perfedknn, chen23aj, wu23z} which divide the learnable
parameters into some that are shared across clients (such as a feature extractor) and some that are specific individual to each client (such as a classification head).
Federated multi-task approaches~\citep{federated_multitask,pfedme,lower_bounds,fed_multi_under_mixture,ditto,NEURIPS2022_c47e6286, ye23b, zhang23w}
learn separate models for each client while still sharing some information across clients, for instance by regularizing towards some global model. 
All of the above approaches, however, require a client to have labeled data
in order to obtain a personalized model.

\paragraph{Learning in a subspace}
This formulation of intrinsic dimensionality and learning in a subspace has been studied in different contexts in the literature. \cite{li2018measuring} introduced the formulation and showed that for real-world problems the intrinsic dimension is much smaller than the total number of parameters $k \ll d$. \cite{aghajanyan2020intrinsic} showed that pretraining reduces the intrinsic dimensionality of fine-tuning. \cite{lotfi2022pac} used this parametrization to achieve non-vacuous generalization bounds for neural networks. \cite{zakerinia2025deep} extended the definition of the intrinsic dimension to multi-task learning. \cite{park2025zip} used the formulation for black-box prompt tuning. \cite{NEURIPS2022_c47e6286} used it to reduce the communication of the global model in personalized federated learning.

\end{document}